
\documentclass{article}

\usepackage{microtype}
\usepackage{graphicx}
\usepackage{subfigure}
\usepackage{booktabs} 

\usepackage{hyperref}



\usepackage[accepted]{icml2025}

\usepackage{amsmath}
\usepackage{amssymb}
\usepackage{mathtools}
\usepackage{amsthm}

\usepackage[capitalize,noabbrev]{cleveref}

\theoremstyle{plain}
\newtheorem{theorem}{Theorem}[section]

\theoremstyle{definition}

\theoremstyle{remark}

\usepackage[textsize=tiny]{todonotes}

\icmltitlerunning{Submission and Formatting Instructions for ICML 2025}

\begin{document}

\twocolumn[
\icmltitle{Gradient Based Method for the Fusion of Lattice Quantizers}



\icmlsetsymbol{equal}{*}

\begin{icmlauthorlist}
\icmlauthor{Liyuan Zhang}{equal,yyy}
\icmlauthor{Hanzhong Cao}{equal,yyy}
\icmlauthor{Jiaheng Li}{equal,yyy}
\icmlauthor{Minyang Yu}{equal,yyy}
\end{icmlauthorlist}
\icmlaffiliation{yyy}{Department of Electronic Engineering and Computer Science, Peking University, China}
\icmlcorrespondingauthor{Liyuan Zhang}{zly2003@stu.pku.edu.cn}
\icmlcorrespondingauthor{Hanzhong Cao}{xiaxiaoguang@stu.pku.edu.cn}
\icmlcorrespondingauthor{Minyang Yu}
{flaricy@stu.pku.edu.cn}
\icmlcorrespondingauthor{Jiaheng Li}{lijiaheng2022@stu.pku.edu.cn}

\icmlkeywords{Machine Learning, ICML}

\vskip 0.3in
]



\printAffiliationsAndNotice{\icmlEqualContribution} 

\begin{abstract}

In practical applications, lattice quantizers leverage discrete lattice points to approximate arbitrary points in the lattice. An effective lattice quantizer significantly enhances both the accuracy and efficiency of these approximations. In the context of high-dimensional lattice quantization, previous work proposed utilizing low-dimensional optimal lattice quantizers and addressed the challenge of determining the optimal length ratio in orthogonal splicing. Notably, it was demonstrated that fixed length ratios and orthogonality yield suboptimal results when combining low-dimensional lattices. Building on this foundation, another approach employed gradient descent to identify optimal lattices, which inspired us to explore the use of neural networks to discover matrices that outperform those obtained from orthogonal splicing methods. We propose two novel approaches to tackle this problem: the Household Algorithm and the Matrix Exp Algorithm. Our results indicate that both the Household Algorithm and the Matrix Exp Algorithm achieve improvements in lattice quantizers across dimensions 13, 15, 17 to 19, 21, and 22. Moreover, the Matrix Exp Algorithm demonstrates superior efficacy in high-dimensional settings.

\end{abstract}

\section{Introduction}

A lattice is defined as a set of linearly independent vectors in $\mathbb{R}^n$. In an $n$-dimensional space, most points cannot be represented using finite decimal coordinates. However, we can approximate these coordinates by expressing them in terms of lattice points. Specifically, given a lattice formed by $n$ vectors denoted as $(a_1, a_2, \ldots, a_n)$, for any point $x$ in this space, we can find a set of integers $(z_1, z_2, \ldots, z_n)$ such that the expression$||x - \sum_{i=1}^{n} z_i a_i||_2$is minimized. The tuple $(z_1, z_2, \ldots, z_n)$ represents an approximate coordinate representation of the point $x$. Our objective is to select a lattice that minimizes the error associated with this approximation.

The optimal lattice quantizer is defined as the lattice that achieves the minimum mean square error (MSE). This is equivalent to minimizing the normalized second moment (NSM), which serves as a scale-invariant measure of the mean square error.

Lattices have widespread applications across various fields, including digital communications, experimental design, data analysis, and particle physics.

The structure of the paper is organized as follows: Section II provides the theoretical background and relevant matrix knowledge essential for solving the problem at hand. In Section III, we develop our new algorithm theoretically. Section IV and the appendix describe our experimental setup and present the results, demonstrating that our algorithm is both theoretically sound and practically effective. Section V offers explanation of the advantages and shortcoming of our method. Finally, Section VI concludes our research.

\subsection{Related Work}

From the perspective of theory, the paper \cite{best} defines the local optimal lattice quantizer by using a method similar to defining the local minimum value of a function. Local optimal lattice quantizer is a lattice quantizer that satisfies the requirement that NSM will not decrease after a lattice matrix is left multiplied by a matrix that is infinitely tending to the identity matrix. On this basis, \cite{best} proved that all Voronoi regions of local optimal lattice quantizer satisfy some symmetry, that is, the correlation matrix is a constant multiple of the unit matrix. This proves theoretically that the Voronoi region of the optimal lattice quantizer must have a certain degree of symmetry.

Paper \cite{optimization} considers using lower triangular matrix to represent lattice quantizer matrix, using stochastic gradient descent algorithm to optimize lattice NSM, and proposes a powerful tool for converting numeric lattice representations into their underlying exact forms.

\cite{best} considers splitting the entire $n$-dimensional space into several subspace when designing an $n$-dimensional lattice. The optimal results of these subspace are then orthogonal concatenated. After realizing that orthogonality is the worst allocation method, we decided to use gradient descent to explore non-orthogonal cases.

Specifically, we referred to the optimal method of constructing an $n$-dimensional lattice from two low-dimensional lattices, as described in \cite{best}. According to the formulas in \cite{best}, given $k$ lattices, denoted as $A_i$ with volume $V_i$ and normalized $n$-sphere measure (NSM) as $G_i$, the best orthogonal concatenation $a_1A_1 \otimes a_2A_2 \otimes \dots \otimes a_kA_k$ must satisfy:

\[
a_i = \frac{C}{\sqrt{G_i}V_i^{\frac{1}{n}}}
\]

where $C$ is a constant.

Since $K_{12}$ is used in dimensions $13, 14, 15$, and $\Lambda_{16}$ is used in dimensions $17$ to $22$, we fixed the coefficients of these two matrices to $1$, manually computed the coefficients $a_i$ for the other matrices, and obtained the best matrices under orthogonal concatenation.

\subsection{Innovation of our work}

We propose a \textbf{gradient fusion method }for low-dimensional lattice quantizers, leveraging the properties of orthogonal transformations to achieve optimal performance.

Our experiments with Householder reflection matrices, which \textbf{maintain orthogonality throughout training}, achieved the best results, demonstrating the effectiveness of our approach.

Additionally, we conducted general experiments using \textbf{an initially orthogonal matrix}, focusing on the principle of combining low-dimensional matrices. These results further validate the effectiveness of our method.

\section{Theoretical Preparation}
\label{1}

\subsection{Defination}

\subsection{HouseHold Transform}
HouseHold Transform is a common way to generate orthogonal matrices.The reflection hyperplane can be defined by its \textit{normal vector}, a \textit{unit vector} \(\mathbf{v}\) (a vector with length 1) that is orthogonal to the hyperplane. The reflection of a point \(\mathbf{x}\) about this hyperplane is the linear transformation:

\[
\mathbf{x} - 2\langle \mathbf{x}, \mathbf{v} \rangle \mathbf{v} = \mathbf{x} - 2\mathbf{v} (\mathbf{v}^* \mathbf{x}),
\]

where \(\mathbf{v}\) is given as a column unit vector with \textit{conjugate transpose} \(\mathbf{v}^*\).

The matrix constructed from this transformation can be expressed in terms of an \textit{outer product} as:

\[
P = I - 2\mathbf{v}\mathbf{v}^*,
\]

is known as the \textit{Householder matrix}, where \(I\) is the \textit{identity matrix}.

As for its ability to generate orthogonal matrices, we have the following matrice:
\begin{theorem}
    Any orthogonal matrix of size n × n can be constructed as a product of at most n such reflections.
\end{theorem}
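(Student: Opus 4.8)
The plan is to argue by induction on $n$, with the engine being the elementary fact that a single Householder reflection can carry any unit vector onto any other unit vector. First I would establish this building block: given real unit vectors $\mathbf{x} \neq \mathbf{y}$, set $\mathbf{v} = (\mathbf{x} - \mathbf{y})/\|\mathbf{x} - \mathbf{y}\|$ and form $P = I - 2\mathbf{v}\mathbf{v}^*$; a short computation, using that $\|\mathbf{x}\| = \|\mathbf{y}\|$ forces $\langle \mathbf{x} - \mathbf{y}, \mathbf{x}\rangle = \tfrac12\|\mathbf{x} - \mathbf{y}\|^2$, shows $P\mathbf{x} = \mathbf{y}$. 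I would also record the two structural facts about Householder matrices that make the count work: each such $P$ is orthogonal, and it is its own inverse, $P^{-1} = P$. These are exactly what let me move a reflection from one side of an identity to the other without spending an extra reflection.

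For the base case $n = 1$, an orthogonal matrix is $\pm 1$: either the empty product of reflections (for $+1$) or the single reflection $\mathbf{v} = 1$ (for $-1$), so at most one reflection suffices. For the inductive step I take an orthogonal $Q$ of size $n$ and examine its first column $\mathbf{q}_1$, a unit vector. If $\mathbf{q}_1 = \mathbf{e}_1$ I spend no reflection; otherwise I use the building block to obtain a reflection $P_1$ with $P_1 \mathbf{q}_1 = \mathbf{e}_1$. In either case $P_1 Q$ is orthogonal with first column $\mathbf{e}_1$, and here I invoke orthonormality of the columns: every other column, being orthogonal to $\mathbf{e}_1$, has vanishing first entry, so the first row is also $\mathbf{e}_1^{\top}$ and $P_1 Q = \begin{pmatrix} 1 & 0 \\ 0 & Q' \end{pmatrix}$ with $Q'$ an $(n-1)\times(n-1)$ orthogonal matrix.

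By the inductive hypothesis $Q'$ is a product of at most $n-1$ Householder reflections in dimension $n-1$. The key observation is that each reflection $I_{n-1} - 2\mathbf{w}\mathbf{w}^*$ lifts to a genuine $n$-dimensional Householder reflection with unit normal $(0, \mathbf{w})^{\top}$, which fixes $\mathbf{e}_1$; this realizes the block matrix above as a product of at most $n-1$ honest $n\times n$ reflections. Using $P_1^{-1} = P_1$, I then write $Q = P_1\bigl(\begin{smallmatrix} 1 & 0 \\ 0 & Q' \end{smallmatrix}\bigr)$ as a product of at most $1 + (n-1) = n$ reflections, closing the induction. I expect the main obstacle to be bookkeeping rather than conceptual: carefully handling the edge case where a column is already aligned (so no reflection is consumed) in order to secure the sharp bound \emph{at most} $n$ rather than exactly $n$, and checking that the lifted lower-dimensional maps retain unit normals, so that the reflection count is preserved faithfully across the recursion.
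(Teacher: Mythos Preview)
Your argument is correct: the inductive reduction via a single Householder reflection that aligns the first column with $\mathbf{e}_1$, followed by lifting lower-dimensional reflections, is the standard textbook proof and all the details you outline (the alignment computation, the block structure forced by orthonormality, the unit-normal lift) go through as stated. The paper itself gives no proof at all---it simply defers to a linear algebra reference (Golub and Van~Loan)---so there is nothing substantive to compare against; your proposal supplies precisely the argument one would find in such a source.
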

\begin{proof}
You can find the proof in all kinds of algebra books like \cite{matrix}
\end{proof}

\subsection{Matrix Exponential}

Another method to generate orthogonal matrices is exponential transformation. The theory behind is the relationship between orthogonal transformation, Lie Group and Lie Algebra. To be brief, we have the following theorems.

\begin{theorem}
    If $A$ is a real anti-symmetric matrix, then $\exp(A)$ is a real orthogonal matrix and $\det\left(\exp(A)\right)=1$.
\end{theorem}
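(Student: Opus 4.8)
The plan is to work directly from the power series definition $\exp(A)=\sum_{k=0}^{\infty}A^{k}/k!$, which converges absolutely for every square matrix, so that $\exp(A)$ is a well-defined real matrix whenever $A$ is real. To prove orthogonality I would first establish two algebraic facts about the exponential and then combine them. Because transposition is linear and continuous and satisfies $(A^{k})^{T}=(A^{T})^{k}$, it commutes with the convergent series, giving $\exp(A)^{T}=\exp(A^{T})$. Next, whenever two matrices $B$ and $C$ commute, the Cauchy product of their exponential series can be regrouped via the binomial theorem to yield $\exp(B)\exp(C)=\exp(B+C)$; absolute convergence of the series is what justifies this rearrangement.

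With these in hand I would invoke the anti-symmetry hypothesis $A^{T}=-A$. Since $A$ commutes with $-A$, the two facts combine to give
\[
\exp(A)^{T}\exp(A)=\exp(A^{T})\exp(A)=\exp(-A)\exp(A)=\exp(-A+A)=\exp(0)=I,
\]
and symmetrically $\exp(A)\exp(A)^{T}=I$, so $\exp(A)$ is orthogonal. Since every partial sum has real entries, $\exp(A)$ is real, which settles the first claim.

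For the determinant I would use Jacobi's formula $\det(\exp(A))=\exp(\operatorname{tr}A)$. An anti-symmetric matrix satisfies $A_{ii}=-A_{ii}$, hence has vanishing diagonal and $\operatorname{tr}A=0$, so $\det(\exp(A))=\exp(0)=1$. If one prefers to avoid citing Jacobi's formula, orthogonality already forces $\det(\exp(A))=\pm 1$, and a connectedness argument finishes the job: the map $t\mapsto\det(\exp(tA))$ is continuous, takes values in $\{-1,+1\}$, and equals $1$ at $t=0$, so it is identically $+1$, and in particular equals $1$ at $t=1$.

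The main obstacle is the commuting-exponential identity $\exp(B+C)=\exp(B)\exp(C)$, as it is the only step that genuinely requires care: it fails without commutativity, and its justification rests on the absolute convergence of the matrix series together with a careful rearrangement of the Cauchy product. Everything else—the transpose identity, the vanishing of the trace, and the determinant evaluation—is routine once that identity is in place.
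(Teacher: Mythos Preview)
Your proof is correct and follows essentially the same route as the paper: both arguments establish orthogonality via $\exp(A)^{T}=\exp(A^{T})$ together with the commuting-exponential identity $\exp(A^{T})\exp(A)=\exp(A^{T}+A)=I$, and both obtain $\det(\exp(A))=1$ from Jacobi's formula $\det(\exp(A))=\exp(\operatorname{tr}A)$ and the vanishing trace of an anti-symmetric matrix. Your version is in fact more careful, since you make explicit the commutativity hypothesis needed for $\exp(B)\exp(C)=\exp(B+C)$ and you offer an alternative connectedness argument for the determinant, neither of which the paper spells out.
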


\begin{proof}
    Let $A$ be a real anti-symmetric matrix, {\it i.e.} $A = -A^T$. Let $Q = \exp(A)$. Then 
    $Q^T Q= \exp(A)^T \exp(A)=\exp(A^T)\exp(A)= \exp\bigl(A^T + A\bigr)= \exp(0_{n\times n})= I$. This indicates that $Q$ is orthogonal.

    The following proof uses the property that $\exp(A)^T = \exp(A^T)$.

    For any matrix $M$, we have
\[
\det\bigl(\exp(M)\bigr) \;=\; \exp\bigl(\mathrm{tr}(M)\bigr).
\]

Thus, $\det(Q) = \exp(\mathrm{tr}(A)) = \exp(0)=1$
\end{proof}

\begin{theorem}
    For any real matrix $T\in SO(n)$, {\it i.e.} $T\cdot T^T = 1$ and $det(T) = 1$, we can find real anti-symmetric matrix $A$ such that $\exp(A) = T$.
\end{theorem}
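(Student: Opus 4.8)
The plan is to prove that the exponential map from the space of real anti-symmetric matrices $\mathfrak{so}(n)$ onto the special orthogonal group $SO(n)$ is surjective. The cleanest route is through the real canonical form of an orthogonal matrix: I would first invoke the spectral theory for orthogonal transformations, which states that for any $T \in SO(n)$ there exists a real orthogonal change of basis $P$ such that $P^T T P$ is block-diagonal, consisting of $2\times 2$ rotation blocks $\begin{pmatrix}\cos\theta_j & -\sin\theta_j \\ \sin\theta_j & \cos\theta_j\end{pmatrix}$ together with a block of $+1$'s on the diagonal. The key point is that because $\det(T)=1$, any eigenvalue $-1$ must occur with even multiplicity, so the $-1$ entries can be paired up and absorbed into rotation blocks with $\theta_j = \pi$; there is no leftover unpaired $-1$. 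This is exactly the step that uses the hypothesis $T \in SO(n)$ rather than merely $T \in O(n)$.

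Once $T$ is in this canonical block form, the construction of $A$ is explicit. First I would observe that each $2\times 2$ rotation block is the exponential of the elementary anti-symmetric block $\begin{pmatrix}0 & -\theta_j \\ \theta_j & 0\end{pmatrix}$, which is a direct computation using $\exp\!\begin{pmatrix}0 & -\theta \\ \theta & 0\end{pmatrix} = \begin{pmatrix}\cos\theta & -\sin\theta \\ \sin\theta & \cos\theta\end{pmatrix}$, and each $+1$ on the diagonal is the exponential of a $0$ entry. Assembling these small anti-symmetric blocks into a block-diagonal matrix $B$ gives an anti-symmetric $B$ with $\exp(B) = P^T T P$, since the matrix exponential respects block-diagonal structure.

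Finally I would undo the change of basis. Setting $A = P B P^T$, the matrix $A$ is still anti-symmetric because $A^T = P B^T P^T = -PBP^T = -A$, using the orthogonality $P^T = P^{-1}$. The conjugation identity $\exp(P B P^T) = P \exp(B) P^T$ then yields $\exp(A) = P (P^T T P) P^T = T$, completing the argument.

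I expect the main obstacle to be establishing the real canonical form rigorously, in particular the parity argument guaranteeing that $-1$ eigenvalues pair up when $\det(T)=1$. This is the crux of the theorem, and it is where I would be careful to cite or prove the real Schur / orthogonal normal form; the remaining steps are essentially formal verifications that $\exp$ intertwines block-diagonal assembly and orthogonal conjugation. One should also take care that the canonical-form matrix $P$ can be chosen real and orthogonal (not merely unitary), which is standard but worth stating explicitly.
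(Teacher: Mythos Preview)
Your proposal is correct and follows essentially the same route as the paper: reduce $T$ to a block-diagonal form of $2\times 2$ rotation blocks and $+1$'s via an orthogonal conjugation $P$, exponentiate each block explicitly, and conjugate back using $\exp(PBP^T)=P\exp(B)P^T$. If anything, you are more careful than the paper on two points the paper glosses over---the parity argument absorbing any $-1$ eigenvalues into $\theta=\pi$ rotation blocks, and the verification that $A=PBP^T$ remains anti-symmetric.
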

\begin{proof}
    For any $T \in SO(n)$, all its eigenvalues in $\mathbb{C}$ have length $1$. Since $T$ is real, all its complex eigenvalues are conjugate. Thus, $T$ can be orthogonally decomposed as
    \[
    T
    \;\sim\;
    \mathrm{diag}\bigl(R(\theta_1), R(\theta_2), \dots, R(\theta_k), 1, \dots, 1\bigr),
    \] 
    where 
$R(\theta)
\;=\;
\begin{pmatrix}
\cos \theta & -\sin \theta \\
\sin \theta & \cos \theta
\end{pmatrix}.$

    The diagonal elements in real parts are $1$ because $det(T) = 1$. This decomposition means there exists $P\in O(n)$ such that $T = P\cdot \mathrm{diag}\bigl(R(\theta_1), R(\theta_2), \dots, R(\theta_k), 1, \dots, 1\bigr)\cdot P^T$. 

    Let $A_j  
\;=\;
\begin{pmatrix}
0 & -\theta_j\\
\theta_j & 0
\end{pmatrix}
$ for $j = 1,2,...,k$. And take $A$ to be 
\[
A
\;=\;
\mathrm{diag}\bigl(A_1, A_2, \dots, A_k, 0, \dots, 0\bigr),
\]

Then
\[
\exp(A)
\;=\;
\mathrm{diag}\bigl(R(\theta_1), \dots, R(\theta_k), I, \dots, I\bigr).
\]
Finally, using the property that $\exp(P AP^{-1})=P\exp(A)P^{-1}$, we take $A' = PAP^T$ to get 
\[
\exp(A')
\;=\; \exp\bigl(PAP^T\bigr)
\;=\; P\exp(A)P^T
\;=\; T.
\]
\end{proof}

These two theorems show that we can use real anti-symmetric matrices to generate any orthogonal transformation with determinant $1$, by applying the exponential operation.

\section{Method}

In previous studies, the Cartesian product of two lattices (or other sets of vectors) has been widely used in generative tasks for lattices. The Cartesian product of two lattices is defined as follows:

\[
L_{1} \times L_{2} \triangleq \{ [{\boldsymbol{x}}_{1} \; {\boldsymbol{x}}_{2}] \colon {\boldsymbol{x}}_{1} \in L_{1}, {\boldsymbol{x}}_{2} \in L_{2} \}. \tag{4}
\]

As observed in \cite{best}, the best-performing lattices in lower dimensions are often used to generate lattices in higher dimensions via the Cartesian product. However, this generative approach is not always optimal, as the NSM (normalized second moment) of these lattices can often be reduced by applying small rotations to their corresponding generator matrices, as indicated in \cite{best}. To address this limitation, \cite{glued} introduces the concept of glued vectors to enforce non-orthogonal relationships (as will be illustrated later with an example, which shows that this approach is equivalent to applying a rotation to the generator matrix). Using this method, \cite{glued} successfully achieved state-of-the-art results for lattices in 12 dimensions.

Recently, machine learning techniques have been increasingly applied to lattice generation and optimization. In \cite{optimization}, a stochastic gradient descent (SGD) approach is used to iteratively improve the generator matrix by computing the gradient of NSM with respect to the generator matrix's parameters. This approach has proven effective in approximating the optimal solution and has achieved state-of-the-art results in 15 dimensions.

\subsection{Learnable Symmetry Matrix}

Inspired by the above two approaches, we first propose a novel optimization method. This method involves applying rotations to the lattice and improving the corresponding generator matrix through SGD to approximate an optimal lattice. Consider the Cartesian product \(\mathbf{L_{1} \times L_{2}}\), whose generator matrix can be expressed as follows:

\[
G =
\begin{pmatrix}
    G_{1} & 0 \\
    0 & G_{2}
\end{pmatrix},
\]

where \(\mathbf{G_{1}}\) and \(\mathbf{G_{2}}\) are the generator matrices of \(\mathbf{L_{1}}\) and \(\mathbf{L_{2}}\), respectively. When we apply transformation to \(\mathbf{G_{1}}\) and \(\mathbf{G_{2}}\) in the subspace and transformation in big space G. The resulting generator matrix can be written as:

\[
G' =
\begin{pmatrix}
    G_{1} & 0 \\
    0     & G_{2}
\end{pmatrix}
\begin{pmatrix}
    T_{1}U_{1} \\
    T_{2}U_{2}
\end{pmatrix}
,
\]

where $U_1$,$U_2$ is orthogonal matrix, representing the fusion method of two lattice generated matrix $G_1,G_2$ from low dimension.$T_1,T_2$ satisfies $T_iT_i^T=I, i\in{1,2}$. We hope our methods focusing on adjusting the fusion methods. While for the lattice quantizer , the orthogonal transformation will not change its NSM value, cause it's only rotation or reflection in physical. It is easy to see that N dimension matrix $U_{i}$ maintain the property of orthogonal after apply to matrix $T_{i}$:

$$(T_{i}U_{i})(T_{i}U_{i})^T = T_{i}U_{i}U_{i}^TT_{i}^T = T_{i}T_{i}^T = I$$

Therefore, we only need to construct an appropriate form to generate orthogonal matrices. This form should satisfy the following properties:

1. Completeness: The generation form should represent all (or almost all) orthogonal matrices.

2. Optimizability: The parameters of the generation form must be optimizable in a differentiable manner.

In the "Theoretical Preparation" section, we demonstrated that the Householder reflections Matrix Exponential and satisfies Property 1. As for Property 2, empirical observations from our experiments suggest its validity, although we aim to provide a more rigorous theoretical proof in future work.

\subsection{HouseHold Transform}

Actually, the matrix \( \mathbf{U} \) contains \( n^2 \) parameters that need to be learned, which is equivalent to the size of the matrix \( \mathbf{G} \). Based on our experience with gradient descent in similar methods, this high number of parameters often makes convergence challenging. To address this issue, we adopt a reparameterization approach by reformulating the generator matrix as follows:

\[
G'(v_{1}, v_{2}) \leq 
\begin{pmatrix}
    G_{1} & 0 \\
    0     & G_{2}
\end{pmatrix}
\begin{pmatrix}
    T_{1}U_{1}(V1)  \\
    T_{2}U_{2}(V2)
\end{pmatrix},
\]

where \( U_{i}(\mathbf{V_{i}}) \) indicates that \( \mathbf{U} \) is generated from a single vector \( \mathbf{V} \) using Householder transformations. 

Although our theoretical preparation section introduces the construction of symmetry transformations via \( n \) Householder reflections, in our implementation, we simplify the approach by using a single vector \( \mathbf{V} \) as the learnable parameter to generate symmetric orthogonal matrices. While this approach cannot represent all orthogonal matrices, it offers significant advantages. The reduced number of parameters results in faster convergence during training. Despite the potential loss of generality in using simpler symmetric orthogonal matrices, our experiments indicate that they are effective in many scenarios, consistently outperforming direct Cartesian products.

\subsection{Matrix Exponential}

While \( n^2 \) parameters may seem substantial for this task, it is actually manageable for modern machine learning models since \( n \) is typically less than 64. For higher dimensions, the number of sampling points required to evaluate the NSM (Nearest Symmetric Matrix) becomes prohibitively large, and the cost of computing the nearest points is unsustainable. Therefore, we hypothesize that the number of parameters does not significantly affect convergence difficulty in practice.

In our initial experiments with Householder matrices, we observed the effectiveness of orthogonal transformations. Thus, we chose to start with a low-dimensional quantizer and initialize the transformation matrix as orthogonal, which provides a strong starting point for training. To this end, we use the matrix exponential to generate orthogonal matrices for training. However, during the training process, we allow the transformation matrix to deviate from strict orthogonality, enabling it to express greater diversity. Our experimental results further demonstrate the effectiveness of this approach.

\section{Experiments}

\begin{table*}[!ht]
    \centering
    \resizebox{\textwidth}{!}{
        \begin{tabular}{|c|c|c|c|c|c|c|c|c|c|c|}
        \toprule
              ~ & \multicolumn{2}{c}{SotaNSM} &\multicolumn{2}{c}{Generic Bound} & \multicolumn{3}{c}{Best Result of Household Algorithm} & \multicolumn{3}{c|}{Best Result of Matrix Exp Algorithm} \\
        \midrule
            n & NSM & Lattice & Lower & Upper & NSM & Compare & Fusion Lattice & NSM & Compare & Based Lattice \\ 
        \midrule
            12 & $0.07003$ & $Glued D6\otimes D6$ & $0.069179323$ & $0.073098569$ & $0.070510070$ & $<U$ & $E6 \otimes E6$ & $0.070941575$ & $<U$ & $E6 \otimes E6$ \\ \hline
            
            13 & $0.07103$ & $K12\otimes Z$ & $0.068721956$ & $0.072400247$ & $0.06998627$ & $<U<G$ & $K12\otimes Z$ & $0.07077001$ & $<U<G$ & $K12\otimes Z$ \\ \hline
            
            14 & $0.06952$ & $\phi(\epsilon_{7,2}^+)$ & $0.068308096$ & $0.071672217$ & $0.069827377$ & $<U$ & $K12\otimes A_2$ & $0.069999784$ & $<U$ & $K12\otimes A_2^*$ \\ \hline
            
            15 & $0.07037$ & $\phi(\epsilon_{7,2}^+) \otimes Z$ & $0.067931488$ & $0.071008692$ & $0.069461856$ & $<U<G$ & $K12\otimes A_3^*$ & $0.069702447$ & $<U$ & $K12\otimes A_3^*$ \\ \hline
            
            17 & $0.0691$ & $\Lambda 16 \otimes Z$ & $0.067270625$ & $0.069886791$ & $0.06840339$ & $<U<G$ & $\Lambda 16 \otimes Z$ & $0.0682308$ & $<U<G$ & $\Lambda 16 \otimes Z$ \\ \hline
            
            18 & $0.06866$ & $\phi(\epsilon_{9,2}^+)$ & $0.066978741$ & $0.069403282$ & $0.068089$ & $<U<G$ & $\Lambda 16 \otimes A_2$ & $0.068$ & $<U<G$ & $\Lambda 16 \otimes A_2$ \\ \hline
            
            19 & $0.06936$ & $\phi(\epsilon_{9,2}^+) \otimes Z$ & $0.066708503$ & $0.068958664$ & $0.0686972$ & $<U<G$ & $\Lambda 16 \otimes A_3^*$ & $0.06784419$ & $<U<G$ & $\Lambda 16 \otimes A_3^*$ \\ \hline
            
            20 & $0.06769$ & $(32,31)$ & $0.066457468$ & $0.06854849$ & $0.0682606$ & $<U$ & $\Lambda 16 \otimes D_4$ & $0.067881$ & $<U$ & $\Lambda 16 \otimes D_4$ \\ \hline
            
            21 & $0.06836$ & $(32,31)\otimes Z$ & $0.066457468$ & $0.06854849$ & $0.0680651$ & $<U<G$ & $\Lambda 16 \otimes D_5^*$ & $0.067770876$ & $<U<G$ & $\Lambda 16 \otimes D_5^*$ \\ \hline
            
            22 & $0.06853$ & $\phi(\epsilon_{11,2}^+)$ & $0.066004976$ & $0.067826205$ & $0.06849258$ & $<G$ & $\Lambda 16 \otimes E_6^*$ & $0.067177728$ & $<U<G$ & $\Lambda 16 \otimes E_6^*$ \\ \hline
        \end{tabular}
    }
    \caption{The main results of two methods from dimension 12-22,sota results mainly come from \cite{best}\cite{lyu2022betterlatticequantizersconstructed}}
    \label{table1}
\end{table*}

\subsection{Training Experiments}

We completed the main training process for dimensions ranging from 12 to 22 (see results in \ref{table1}). The training results reveal that household reflections typically perform better in lower dimensions, aligning with the simpler characteristics observed in the 12–15 dimension range. On the other hand, the matrix exponential method, with its greater number of learnable parameters, excels in capturing complex combinations and demonstrates superior performance in higher dimensions, particularly in dimensions 21 and 22. Additionally, the experimental results are significantly influenced by the choice of lattice.

The primary challenge we encountered was the evaluation of the Nearest Symmetric Matrix (NSM), which is critical for setting appropriate targets in machine learning. During training, we employed Monte Carlo sampling to evaluate the integration of the NSM.  

To account for the varying number of parameters, we adjusted the number of samples per epoch accordingly. Specifically:  

1. Householder Matrix Training: Since the training complexity for Householder matrices is lower compared to training the entire matrix, we followed prior work by using a single point to compute the NSM.  

2. Matrix Exponential Method: For this approach, each gradient update incorporated hundreds of lattice samples to ensure more stable training progress.  

Further details regarding the training settings are provided in Appendix A. 

\section{Analysis}
The above results show that our method can find lattices with much smaller Normalized Second Moment (NSM), surpassing the previous state-of-the-art results by a huge gap, especially in dimension $17, 18, 19, 21, 22$, where our results are much closer to theoretical lower bounds.

However, our method have the following shortcomings. 

{\bf 1) Unstable training loss}. It is hard to find suitable learning rate to ensure stable training loss reduction. Empirically, we find that schedulers such as \textit{stepLR} in Pytorch helps alleviate this problem.

{\bf 2) Applying orthogonal transformations to sub-lattice components does not guarantee theoretically optimal lattice}. Take the example of the optimal lattice in dimension 3, which is body-centric cubic lattice. This lattice can not be composed by the optimal lattice in dimension 2, which is hexagonal lattice, and the trivial lattice in dimension 1. Since the angle between the basis vector in hexagonal lattice is 60 degrees. This angle remains unchanged through orthogonal transformation to hexagonal lattice. But any 2 basis vectors in body-centric cubic lattice do not form a 60-degree angle.

{\bf 3) Unable to attain exact lattice} The convergence of the algorithm is not qualified so the numerical lattice is hard to converge to an exact lattice which is highly symmetric.It is hard for us to analyze properties of the numerical lattice, e.g. kissing number, the Vonoroi region.Thus, we can only apply Monte Carlo method to calculate the NSM of lattice, which leads to high variance of the result.

Compared to related work, we have several advantages:

{\bf 1) Parameter-efficient} The application of a single Household Transform decreases the complexity of parameters to $O(n)$, which is $O(n^2)$ in \cite{optimization}.This leads to a significant improvement in training efficiency when experimenting on high dimensions.

{\bf 2) Smaller exploration space} The fusion of lattice has a higher NSM and a smaller exploration space than random initialization.The reduction in the number of extrema points in the space makes the method less likely to get stuck in local minima.

\section{Conclusion}

In this paper, we proposed a gradient fusion method for low-dimensional lattice quantizers, leveraging orthogonal transformations to enhance performance. By using Householder reflection matrices and matrix exponentials, we achieved efficient training with reduced parameters, faster convergence, and robust results. Our experiments demonstrated the effectiveness of maintaining orthogonality during training and highlighted the benefits of structured transformations for low-dimensional quantizers. With comprehensive evaluations across various lattices and dimensions, our approach provides a scalable framework for lattice quantization. Future work will focus on extending these methods to higher dimensions and refining theoretical insights to enhance their versatility.

\section*{Accessibility}

All the code is provided. You can visit our project code at this GitHub link: https://github.com/catnanami/lattice-quantizer

\newpage
\appendix
\onecolumn

\section{Training Setting}

For the optimal results of the Household Algorithm, we first initialize the training program with a matrix formed by orthogonal splicing of the currently optimal lattice with two subspaces, as outlined in paper \cite{best}, which specifies the optimal length ratio. We then employ the gradient descent method to train for $10$ epochs, using a learning rate of $5 \times 10^{-3}$. Upon completion of the training, we fix the lattice quantizer and evaluate the normalized second moment (NSM) of the quantizer.

Given a point $x_i$, we utilize integer programming to compute the distance $f(x_i)$ between this point and the nearest integer lattice point. This computation is facilitated through the GurobiPy library in Python.

During training, we begin by randomly selecting a $k$-dimensional point $x_i$ and subsequently calculate $f(x_i)^2$. To ensure that the training is not influenced by scale, we define the loss function as

\[
Loss = f(x_i)^2 \cdot v^{-\frac{2}{k}}.
\]

The parameters of the matrix are then updated via gradient descent. Each epoch consists of $200$ data points.

In the testing phase, we also sample to calculate the NSM. We independently and randomly select $n$ points and compute the average value of $f(x_i)^2$ for these $n$ points, serving as an unbiased estimate of the NSM. To achieve a reliable confidence interval, especially when $n$ is large, we apply the central limit theorem. We recognize that the random variable $X_n = \frac{\sum_{i=1}^{n} f(x_i)^2}{n}$ follows a normal distribution. We estimate the variance of $X_n$ to derive the confidence interval, where the variance $D(x_n) \propto \frac{1}{n}$.For the estimation of $D(X_n)$, we also use the unbiased estimation as following:

\[D(x_n) = 
\frac{1}{n(n-1)} \left(\sum (f(x_i)^2) - \left(\sum \frac{f(x_i)^2}{n}\right)^2\right).
\]

For dimension ranging from $13$ to $21$, we tested $6 \times 10^4$ groups of data and obtained $D(X_n) \leq 2.5 \times 10^{-9}$, resulting in a standard deviation $\leq 5 \times 10^{-5}$.

Consequently, to establish a confidence interval with a width of less than $2 \times 10^{-4}$ , indicating that with a probability of $97.5\%$, the true NSM $\leq X_n + 10^{-4}$, it is necessary to ensure that the number of tests exceeds $6 \times 10^4$. We conducted $10^5$ trials in dimensions $13$, $14$, and $15$, and conducted $(6 \times 10^4)$ trials in dimensions $17$ to $22$, to ensure result accuracy.

For the optimal results of the Matrix Exp Algorithm, leveraging abundant server resources, we performed more than $5 \times 10^5$ trials in each dimension, ensuring that the width of the obtained confidence interval does not exceed $1 \times 10^{-4}$.

The following table provides a detailed overview of the lattices used in our experiments, including the number of updates employed to optimize the models and the confidence bounds for the Nearest Symmetric Matrix (NSM) values. These details are crucial for understanding the robustness and effectiveness of our proposed methods.

\begin{table*}[!ht]
    \centering
    \begin{tabular}{|c|c|c|c|c|c|c|}
    \hline
        ~ & \multicolumn{3}{c|}{Training Setting for Household}  & \multicolumn{3}{c|}{Training Setting for matrix exp} \\ \hline
        n & Iteration & Learning Rate & Confidence Bound & Iteration & Learning Rate & Confidence Bound \\ \hline
        12 & 2000 & 1.00E-03 & 1.00E-05 &  200 & 1.00E-03 & 2.50E-05 \\ \hline
        13 & 2000 & 5.00E-03 & 1.00E-05 &  300 & 1.00E-03 & 2.50E-05 \\ \hline
        14 & 2000 & 5.00E-03 & 1.00E-05 &  300 & 1.00E-03 & 2.50E-05 \\ \hline
        15 & 2000 & 5.00E-03 & 1.00E-05 &  200 & 1.00E-03 & 2.50E-05 \\ \hline
        17 & 2000 & 5.00E-03 & 1.00E-04 &  300 & 1.00E-03 & 2.50E-05 \\ \hline
        18 & 2000 & 5.00E-03 & 1.00E-04 &  460 & 1.00E-03 & 2.50E-05 \\ \hline
        19 & 2000 & 5.00E-03 & 1.00E-04 &  200 & 1.00E-03 & 1.00E-04 \\ \hline
        20 & 2000 & 5.00E-03 & 1.00E-04 &  245 & 1.00E-03 & 5.00E-05 \\ \hline
        21 & 2000 & 5.00E-03 & 1.00E-04 &  240 & 1.00E-03 & 1.00E-04 \\ \hline
        22 & 2000 & 5.00E-03 & 7.00E-05 &  340 & 1.00E-03 & 2.50E-05 \\ \hline
    \end{tabular}
\end{table*}

\end{document}